\documentclass[journal, twoside]{IEEEtran}

\ifCLASSINFOpdf
\else
\fi

\usepackage{cite}
\usepackage{float}

\usepackage[T1]{fontenc}

\usepackage{graphicx}
\usepackage{lmodern}
\usepackage{booktabs} 
\usepackage{adjustbox}
\usepackage{stfloats}
\usepackage{wrapfig}
\usepackage{lipsum}
\usepackage{amssymb}
\usepackage{amsmath}
\usepackage{siunitx}
\usepackage{mathtools}
\usepackage{dsfont}

\usepackage{multirow}

\usepackage{makecell}

\let\labelindent\relax
\usepackage[inline]{enumitem}
\usepackage[ruled,vlined,linesnumbered,noend]{algorithm2e}
\usepackage{ifthen}
\usepackage{dsfont}
\usepackage{balance}
\usepackage{dirtytalk}
\usepackage{afterpage}
\usepackage{bm}
\usepackage{times}
\usepackage{enumitem}
\usepackage[colorlinks=true, urlcolor=blue, linkcolor=red]{hyperref}
\usepackage{cleveref}

\newcommand{\N}{\mathbb{N}}
\newcommand{\ra}{\rightarrow}
\newcommand{\R}{\mathbb{R}}

\newcommand{\method}{PolyMerge}  

\newtheorem{remark}{Remark}
\newtheorem{theorem}{Theorem}

\makeatletter
\newcommand{\crefnames}[3]{%
  \@for\next:=#1\do{%
    \expandafter\crefname\expandafter{\next}{#2}{#3}%
  }%
}
\makeatother
\crefnames{section}{Sec.}{}
\crefnames{equation}{Eq.}{}
\crefnames{figure}{Fig.}{}
\crefnames{theorem}{Thm.}{}

\DeclareMathOperator*{\argmin}{arg\,min}

\newcommand{\defeq}{\vcentcolon=}
\newcommand{\bigmid}{\text{ }\big|\text{ }}

\newcommand{\gsmodel}{\mathcal{M}}
\newcommand{\gsellipsoid}{\mathcal{M}}
\newcommand{\ellipsoid}{\mathcal{E}}
\newcommand{\Chi}{\mathcal{X}}
\newcommand{\Chisafegs}{\mathcal{X}_{\text{safe}}^{\text{GS}}}
\newcommand{\Chiunsafegs}{\mathcal{X}_{\text{unsafe}}^{\text{GS}}}

\newcommand{\hull}{\text{Hull}}
\newcommand{\vol}{\text{Vol}}

\newcommand{\occmap}{O}
\newcommand{\allvoxels}{\mathcal{V}}
\newcommand{\occvoxels}{\mathcal{V}^+}
\newcommand{\gridsize}{g}
\newcommand{\voxcover}{\mathcal{C}}
\newcommand{\voxsubcover}{\overline{\mathcal{C}}}

\newcommand{\chullset}{\mathcal{H}}

\newcommand{\hullsubcover}{\overline{\mathcal{H}}}

\newcommand{\voxsubcovern}[1]{\overline{\mathcal{C}^{#1}}}
\newcommand{\voxsubcoverni}[2]{\overline{\mathcal{C}^{#1}_{#2}}}
\newcommand{\hullsubcovern}[1]{\overline{\mathcal{H}^{#1}}}
\newcommand{\hullsubcoverni}[2]{\overline{\mathcal{H}^{#1}_{#2}}}

\newcommand{\arbset}{S}

\newcommand{\aabb}{\mathcal{B}}

\newcommand{\start}{x_\text{start}}
\newcommand{\goal}{x_\text{goal}}
\newcommand{\startm}[1]{x_\text{start}^{#1}}
\newcommand{\goalm}[1]{x_\text{goal}^{#1}}
\newcommand{\Deltat}{\Delta t}
\newcommand{\nominalu}{u_\text{nom}}

\newcommand{\cbfu}{u_\text{cbf}}
\newcommand{\cbfut}{u_{\text{cbf}, t}}

\newcommand{\clf}{V(x)}

\title{
\method{}: Compressing 3D Gaussian Splats with Polytope Coverings for Provably Safe Resource-Constrained
Navigation
}

\usepackage{fancyhdr}
\fancypagestyle{firstpage}{
  \fancyhf{}
  \fancyhead[LE,RO]{\footnotesize \thepage}
  \fancyhead[LO]{\footnotesize IEEE ROBOTICS AND AUTOMATION LETTERS. PREPRINT VERSION. ACCEPTED April, 2026}
  
}

\author{Jihoon Hong$^{1}$, Chih-Yuan Chiu$^{1}$, Sara Fridovich-Keil$^{1}$, and Glen Chou$^{1}$
\thanks{$^{1}$Georgia Institute of Technology.}
}

\begin{document}

\maketitle

\thispagestyle{firstpage}
\pagestyle{fancy}

\begin{abstract}

Obstacle avoidance is essential for safe navigation and motion planning. 
Recent radiance field reconstruction methods
enable
object detection and modeling with 
high
fidelity, but remain too memory- and compute-intensive for
on-board perception-based path planning.
To address these limitations, we propose \method{} to 
convert
a large, photorealistic 3D Gaussian Splatting (3DGS) model of a scene into a lightweight representation 
of convex polytopes whose 
union
provably over-approximates
all obstacles in the original 3DGS model.
\method{} 
tunes the
polytope count
to trade off conservativeness and compute cost, and integrates with
control barrier functions (CBFs) to 
plan collision-free paths.
We showcase \method{} in simulation and 
hardware experiments
on a 
Crazyflie drone, which uses \method{} to compute and follow safe trajectories in real time 
under severe
onboard compute constraints, outperforming baselines in speed while guaranteeing safety.
For our code and videos, visit \url{https://athlon76.github.io/PolyMerge-website/}.
\end{abstract}

\begin{IEEEkeywords}
Vision-Based Navigation; Collision Avoidance; Reactive and Sensor-Based Planning
\end{IEEEkeywords}
\begin{figure*}[!b]
  \centering
  \vspace{-13pt}
  \includegraphics[width=1.0\textwidth]{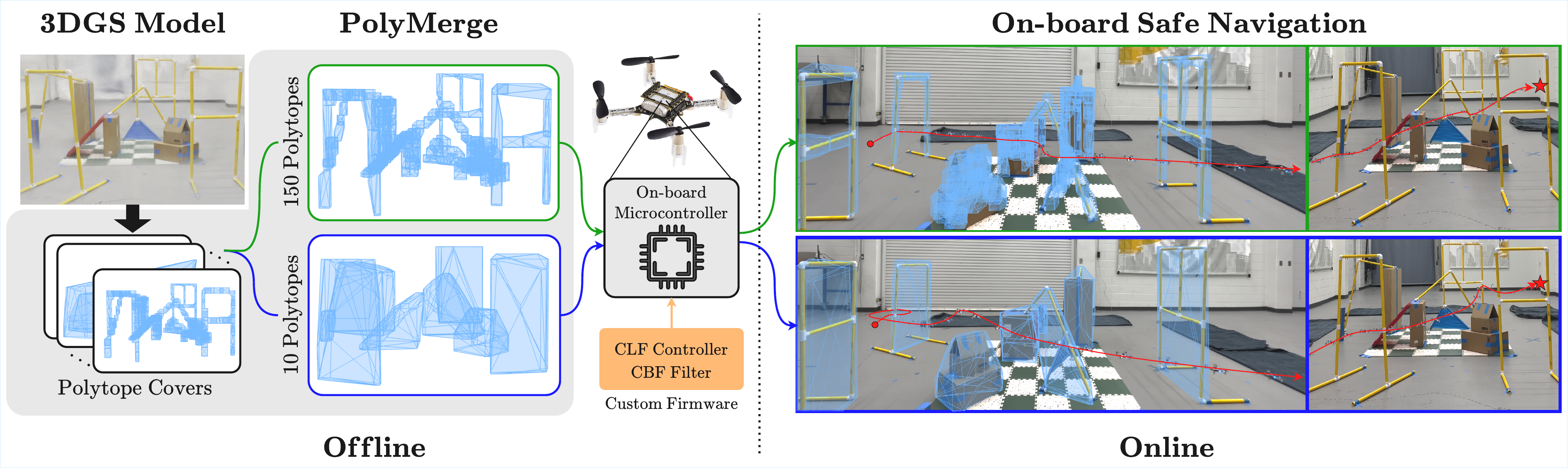}
  \vspace{-8mm}
  \caption{\textbf{Overview.} \textbf{(Left)} \method{} converts a 3DGS model into multiple polytope covers, each of which covers all obstacles but with different numbers of convex hulls and levels of conservativeness. \textbf{(Middle)} A cover that satisfies the hardware-driven computational constraints can be selected and loaded to the microcontroller of the drone, \textbf{(Right)} which then performs CBF filtering of nominal control inputs provided by a CLF-based controller to navigate safely, entirely on-board. Using fewer polytopes reduces hardware cost but may introduce conservativeness in navigation, as can be seen from detours of the drone around hoop obstacles. Red circles and stars respectively denote the start and goal positions of the drone.}
  \label{fig:teaser}
\end{figure*}

\section{Introduction}
\label{sec: Introduction}

\IEEEPARstart{T}{o}
guarantee safe operation, a robot must accurately detect obstacles in its environment. As such, recent advances in radiance field-based scene reconstruction \cite{mildenhall2020nerf, YuFridovichkeil2021Plenoxels, mueller2022instant, Kerbl2023GaussianSplatting3D}, such as 3D Gaussian splatting (3DGS) methods \cite{Kerbl2023GaussianSplatting3D}, have attracted increasing attention in the robotics community for their potential to precisely identify the location and geometry of obstacles. However, the high computational and memory demands of most existing scene reconstruction approaches preclude their direct integration into perception pipelines on resource-constrained robot hardware.
For example, a 3DGS model often contains hundreds of thousands to millions of Gaussians, occupying megabytes or gigabytes of memory. This far exceeds the memory capacity of mobile platforms, typically on the order of kilobytes for small drones such as Crazyflies.
Moreover,
existing approaches to improve the computational efficiency of 
scene reconstructions often focus on retaining visual fidelity for vision tasks \cite{HansonTu2025PUP3DGS, Guedon2023SuGaR}, but do not ensure geometric over-approximation of occupied space and thus cannot safely be used for motion planning. Among methods designed for robotic perception, despite improvements in computational efficiency \cite{ChenSchwager2025SaferSplat} there remains a gap in performance requirements that prohibits deployment on resource-constrained robotic hardware such as small drones.

To bridge this gap, we propose \method{}, a novel framework for compressing a precomputed 
scene representation to enable on-device real-time obstacle avoidance and motion planning via a control barrier function (CBF)-based safety filter. 
Our method can take any photorealistic scene model as input; our experiments are built on 3DGS \cite{Kerbl2023GaussianSplatting3D}, a state-of-the-art computer vision method that represents a scene as a collection of ellipsoids. \method{} first extracts a volumetric occupancy grid to ensure all obstacles in the 
input scene are considered occupied. It then computes a covering set of convex polytopes which contains all occupied space.
Finally,
\method{} performs iterative merging by repeatedly replacing pairs of convex polytopes with their convex hull, 
with the merged polytopes chosen to minimize the volume of unoccupied space within the convex hull. This iterative merging allows \method{} to simplify a scene to accommodate arbitrary computational constraints while ensuring obstacle over-approximation with  
small conservativeness, enabling safe and efficient navigation. Our contributions include:
\begin{enumerate}
    \item We present \method{}, a scene compression algorithm for constructing and merging polytope set covers that compute guaranteed over-approximations of 3D obstacles in a given 3DGS environment model.
    \item We show that \method{} can modulate the number of geometric primitives used to describe obstacles
    to satisfy strict resource constraints at the cost of increased conservativeness in the resulting obstacle over-approximation. 
    \item In simulations and hardware experiments on resource-constrained Crazyflies, we illustrate that \method{} outperforms existing baselines \cite{ChenSchwager2025SaferSplat, Guedon2023SuGaR} (which cannot be deployed at all on Crazyflies or do not guarantee safe navigation) in efficiently generating safe trajectories.
\end{enumerate}
\section{Related Work}
\label{sec: Related Work}

\subsection{Scene Reconstruction}
\label{subsec: Scene Reconstruction for Robotics Tasks} 

The design of efficient algorithms to generate high-fidelity 3D scene reconstructions from multi-view 2D images has long been a core focus in both computer vision and robotics.
Recent, key advances include the development of implicit neural radiance fields \cite{mildenhall2020nerf}, plenoptic voxel grids \cite{YuFridovichkeil2021Plenoxels}, and state-of-the-art 3D Gaussian Splatting (3DGS) \cite{Kerbl2023GaussianSplatting3D} scene models which are the input to our method.
While 3DGS captures photorealistic detail of obstacles in a scene, a 3DGS model can contain hundreds of thousands of individual ellipsoids that render it too computationally intensive for direct use in robotic perception and safe navigation.
To improve computational efficiency, many methods have been proposed to reduce scene complexity, chiefly by pruning away ellipsoids not essential to visual fidelity \cite{HansonTu2025PUP3DGS, Ali2024TrimmingtheFat}, or estimating object surfaces via a 2D mesh \cite{Guedon2023SuGaR} that is more efficient for downstream computation. While 
\cite{HansonTu2025PUP3DGS, Ali2024TrimmingtheFat, Guedon2023SuGaR}
reduce 3DGS model complexity, they 
do not ensure safety for downstream robotics tasks, as space occupied by ellipsoids in the original 3DGS model may be marked unoccupied in a compressed or mesh-extracted representation.

\subsection{CBFs and 
Safe Control from Perception Data}
\label{subsec: CBFs and Safety Filters}

\looseness-1Safety filters
are designed to minimally adjust unsafe control inputs to prevent safety violations.
While a diverse range of safety filters have been proposed \cite{HsuHuFisac2024TheSafetyFilter}, we focus on Control Barrier Function (CBF)-based methods 
\cite{AmesCoogan2019CBFs}, 
which enforce forward set invariance to guarantee safety.
In particular, many recent works present CBF-based controller designs which incorporate perception data for obstacle detection and avoidance \cite{Sa2024PointCloudBasedCBFRegression, Long2021LearningBarrierFunctionsWithMemoryforRobustSafeNavigation, Unlu2023CBFLiDARInertialOdometry, Zhou2024ControlBarrierAidedTeleoperationVisualInertialSLAM}. 
For instance, \cite{Sa2024PointCloudBasedCBFRegression} and \cite{Long2021LearningBarrierFunctionsWithMemoryforRobustSafeNavigation} use CBFs to avoid obstacles which are currently visible to the robot's sensors, such as LiDAR or stereo cameras, but do not attempt to fully map the robot's environment. 
In contrast, \cite{Unlu2023CBFLiDARInertialOdometry} and \cite{Zhou2024ControlBarrierAidedTeleoperationVisualInertialSLAM} formulate CBFs based on scene-wide obstacle representations constructed using LiDAR-inertial odometry and visual-inertial simultaneous localization and mapping (SLAM), respectively, but do not take advantage of high-fidelity photorealistic scene models such as 3DGS \cite{Kerbl2023GaussianSplatting3D}. 
Other methods perform safe perception-based control using image data \cite{
dawson2022learning, chou2022safe
}, but focus on ensuring safety under localization error caused by a learned perception module while assuming simple, known obstacle geometry (e.g., a single spherical obstacle); our method complements these approaches.

Methodologically,
since our experiments use the polytope-based obstacle set representations produced by our \method{} algorithm to synthesize CBF-based controllers, our work is also related to \cite{Molnar2025NavigatingPolytopesWithSafetyACBFApproach}, which likewise generates CBFs in environments with polytope-shaped obstacles. However, in contrast to \cite{Molnar2025NavigatingPolytopesWithSafetyACBFApproach}, our main contribution lies in constructing lightweight, polytope-shaped obstacle representations from 3DGS models, rather than the downstream design of CBF controllers.

\subsection{Safe Motion Planning using Scene Maps}
\label{subsec: Safe Motion Planning using Scene Maps}

The use of radiance field-based map construction to guide safe motion planning tasks has received increasing attention in the robotics literature in recent years.
In particular, prior work has used neural \cite{Adamkiewicz2022NerfNavVisionOnlyRobotNavigationinaNeuralRadianceWorld, KurenkovNFOMP} or Gaussian representations \cite{Chen2025GRaDNavPlusPlus} to encode obstacle locations within a \textit{collision loss term}; such methods, however, cannot guarantee collision avoidance.
In contrast, \cite{Low2025SousVide} and \cite{Chen2025SplatNav} likewise use 3DGS-based scene reconstructions to encode obstacle locations and geometries, but enforce collision avoidance guarantees via \textit{hard constraints}, similar to our work.
However, whereas \cite{Low2025SousVide} and \cite{Chen2025SplatNav} use model predictive control and trajectory-level collision checking, respectively, to enforce collision avoidance, our \method{} method uses CBF-based filters to design minimally-invasive safe control inputs and can be run on-board with very limited compute (e.g., on Crazyflie drones).
Methodologically, our work is most similar to \cite{Tong2023NerfCBF, ChenSchwager2025SaferSplat}, which likewise formulate a CBF safety filter 
over
a 3DGS scene map, improving reconstruction fidelity compared to odometry- and SLAM-based representations.
However, our \method{} method attains higher computation and memory efficiency compared to \cite{Tong2023NerfCBF, ChenSchwager2025SaferSplat}, by merging polytope set covers over a 3DGS scene to over-approximate obstacles.
\section{Preliminaries}
\label{sec: Preliminaries}

\subsection{3D Gaussian Splatting (3DGS)}
\label{subsec:prelim_3dgs}

A 3DGS model $\gsmodel = \{\ellipsoid_i\}_{i \in I}$ consists of ellipsoids $\ellipsoid_i$ derived from a number of Gaussian distributions indexed by $I$, each specified by a mean vector $c_i \in \R^3$, positive definite covariance matrix $\Sigma_i \in \R^{3 \times 3}$, an opacity value, and a set of spherical harmonics coefficients to describe color \cite{Kerbl2023GaussianSplatting3D}.
Concretely, each ellipsoid $\ellipsoid_i$ is centered at the corresponding mean vector $c_i$ with kernel given by $\Sigma_i^{-1}$, i.e., $\ellipsoid_i := \{p \in \R^3: (p-c_i)^\top \Sigma_i^{-1}(p-c_i) \leq 1 \}$ for each $i \in I$.

\subsection{Control Barrier Functions (CBFs)}
\label{subsec:prelim_cbf}

CBFs enable the synthesis of controllers that guarantee robot safety by rendering a provided safe set forward invariant \cite{AmesCoogan2019CBFs}.
Formally, consider control-affine system dynamics $\dot x = f(x) + g(x) u$ evolving over a state space $\Chi \subseteq \R^n$, with controls $u$ drawn from a set $\mathcal{U} \subseteq \R^{n_i}$ of admissible controls, where $f: \Chi \ra \R^n$ and $g: \Chi \ra \R^{n \times n_i}$.
Given a \textit{safe set} $\mathcal{S} \subseteq \Chi$ characterized as the zero super-level set of a function $\psi: \Chi \ra \R$, i.e., $\mathcal{S} = \{x \in \mathcal{X}: \psi(x) \geq 0 \}$, we wish to ensure 
that
$x(t) \in \mathcal{S}$ for all $t\in[0, T]$.
We call $\psi$ 
a CBF if there exists an extended class-$\mathcal{K}_{\infty}$ function $\alpha:\R\rightarrow\R$ satisfying, $\forall x' \in \Chi$, $\exists u' \in \mathcal{U}$ \cite{AmesCoogan2019CBFs}:
\begin{align} 
\label{eq:cbf_condition_affine}
&
\big[L_f \psi(x') + L_g \psi(x') u' \big] \geq-\alpha\big(\psi(x')\big),
\end{align}
where $L_f$ and $L_g$ denote Lie derivatives with respect to $f$ and $g$, respectively.
A controller which generates inputs $u(t) \in \mathcal{U}$ satisfying (at each $t$) 
\eqref{eq:cbf_condition_affine} with $x' = x(t)$ and $u' = u(t)$, would guarantee the forward-invariance of $\mathcal{S}$ with respect to 
$x(t)$. 
In other words, if $x(0) \in \mathcal{S}$ and at each $t$, $u(t) \in \mathcal{U}$ satisfies \eqref{eq:cbf_condition_affine} with $x = x(t)$, then $x(t) \in \mathcal{S}$ for all $t$.

\subsection{CBF in Polytope Scenes}
\label{subsec:prelim_cbf_polytope}
While finding a valid CBF that satisfies \eqref{eq:cbf_condition_affine} is often challenging, a previous work \cite{Molnar2025NavigatingPolytopesWithSafetyACBFApproach} established a systematic method of constructing the CBF when obstacles are represented with convex polytopes under single-integrator dynamics $\dot x = u$.
Specifically, it assumes a finite collection of obstacles $\chullset=\{H_k\}_{k\in K}$ with index set $K$,
where each $H_{k}$ is a convex polytope described by bounding planes $\Lambda_k$:
\begin{equation}
\label{eq:polytope}
    H_{k}=\textstyle\bigcap_{l\in \Lambda_k}\big\{ p\in\R^3 \mid 
    n_{kl}^\top (p-w_{kl}) \leq 0 \big\},
\end{equation}
Above, $n_{kl} \in \R^3$ and $w_{kl} \in \R^3$ denote the normal and offset vectors of the $l$-th plane of $H_{k}$,  respectively.
Then, the barrier function, defined as
\begin{equation}
\label{eq:cbf_polytope}
    \psi(p)=\min_{k\in K}\max_{l\in \Lambda_k} n_{kl}^\top (p-w_{kl}),
\end{equation}
evaluates to a strictly positive value if and only if $p$ does not belong to any polytope $H_k$.

Since both the min and max functions contained in $\psi(p)$ are non-differentiable, they are approximated by the smooth soft-min and soft-max functions, respectively defined as $A \mapsto -\frac{1}{\kappa}\log\big(\sum_{a\in A}{e^{-\kappa a}}\big)$ and $A \mapsto \frac{1}{\kappa}\log\big(\sum_{a\in A}e^{\kappa a}\big) \ \forall A \subset \R$,
which admit Lie derivatives; we use $\kappa=100$ in our experiments.
\cite{Molnar2025NavigatingPolytopesWithSafetyACBFApproach} then illustrates that with a function $\alpha(\psi(x)) := \beta \psi(x)$ for some $\beta > 0$, $\psi$ is a CBF that renders the zero super-level set $\mathcal{S}$ (which by construction does not coincide with the interior of any polytope) forward-invariant for a point agent.
\cite{Molnar2025NavigatingPolytopesWithSafetyACBFApproach} also extends \eqref{eq:cbf_polytope} to a spherical agent with radius $r$ as $\psi_r(p)=\psi(p)-r$,
which
inflates all obstacles by $r$.
\section{Problem Statement}
\label{sec: Problem Statement}

While SaferSplat \cite{ChenSchwager2025SaferSplat} has explored CBF-based safe navigation directly on a 3DGS scene, it cannot be directly used for on-board safe navigation, where both memory and compute resources are often too  limited to process a large set of Gaussians.
\looseness-1To circumvent these challenges, we formulate our \method{} algorithm that compresses a 3DGS model into a lightweight representation, in turn enabling the construction of a CBF that can be evaluated on-device in real time.

For a 3DGS model $\gsellipsoid$, let the set of unsafe states defined by the model $\Chiunsafegs$ be
\begin{equation}
    \Chiunsafegs=\textstyle\bigcup\limits_{\ellipsoid_i\in \gsellipsoid}\ellipsoid_i,
\end{equation}
\looseness-1which includes all object surfaces implicitly defined by the ellipsoids $\ellipsoid_i$.
We consider $\Chiunsafegs$ and $\Chisafegs\defeq \big(\Chiunsafegs\big)^c$, as the ground truth unsafe and safe sets, respectively.
We also define 
$\vol(\cdot)$ to return the volume of a given subset of $\R^3$.

\looseness-1We now describe the problem of 3DGS compression for on-board safe motion planning. 
Suppose we are given a 3DGS model $\gsellipsoid$, and a budget $B$ over the number of polytopes we can process on-device. We aim to obtain a collection $\chullset^\star$ of convex polytopes
which solves \eqref{Eqn: Compression, Optimization Program}, as given below:
\begin{subequations} 
\label{Eqn: Compression, Optimization Program}
\vspace{-1mm}
\begin{align} \label{eq:objective}
    \min_{\chullset} \hspace{5mm} &L(\chullset)  \defeq \vol\left(\bigcup_{H_k\in \chullset} H_k \Big\backslash \Chiunsafegs\right), \\ \label{eq:c1_overapprox}
    \text{s.t.} \hspace{5mm} &\Chiunsafegs \textstyle\subseteq \bigcup_{H_k\in \chullset} H_k, \\ \label{eq:c2_budget}
    &|\chullset| \leq B,
\end{align}
\end{subequations}
where $|\cdot|$ denotes set cardinality.
\eqref{eq:c1_overapprox} guarantees that $\chullset$ over-approximates the true unsafe set and thus ensures safety, while \eqref{eq:c2_budget} limits the number of polytopes in $\chullset$ to respect computational and storage constraints imposed by hardware.
Under \eqref{eq:c1_overapprox} and \eqref{eq:c2_budget}, we aim to minimize \eqref{eq:objective}, which measures how conservative $\chullset$ is compared to $\gsellipsoid$ in describing obstacles.
A set of convex polytopes $\chullset$ which yields a large objective value $L(\chullset)$ would
designate an excessively large subset of $\Chisafegs$ as impassable,
and thus induce overly conservative motion plans downstream.
\section{Methods}
\label{sec:Methods}

\begin{figure*}[!t]
  \centering
  \includegraphics[width=0.95\textwidth]{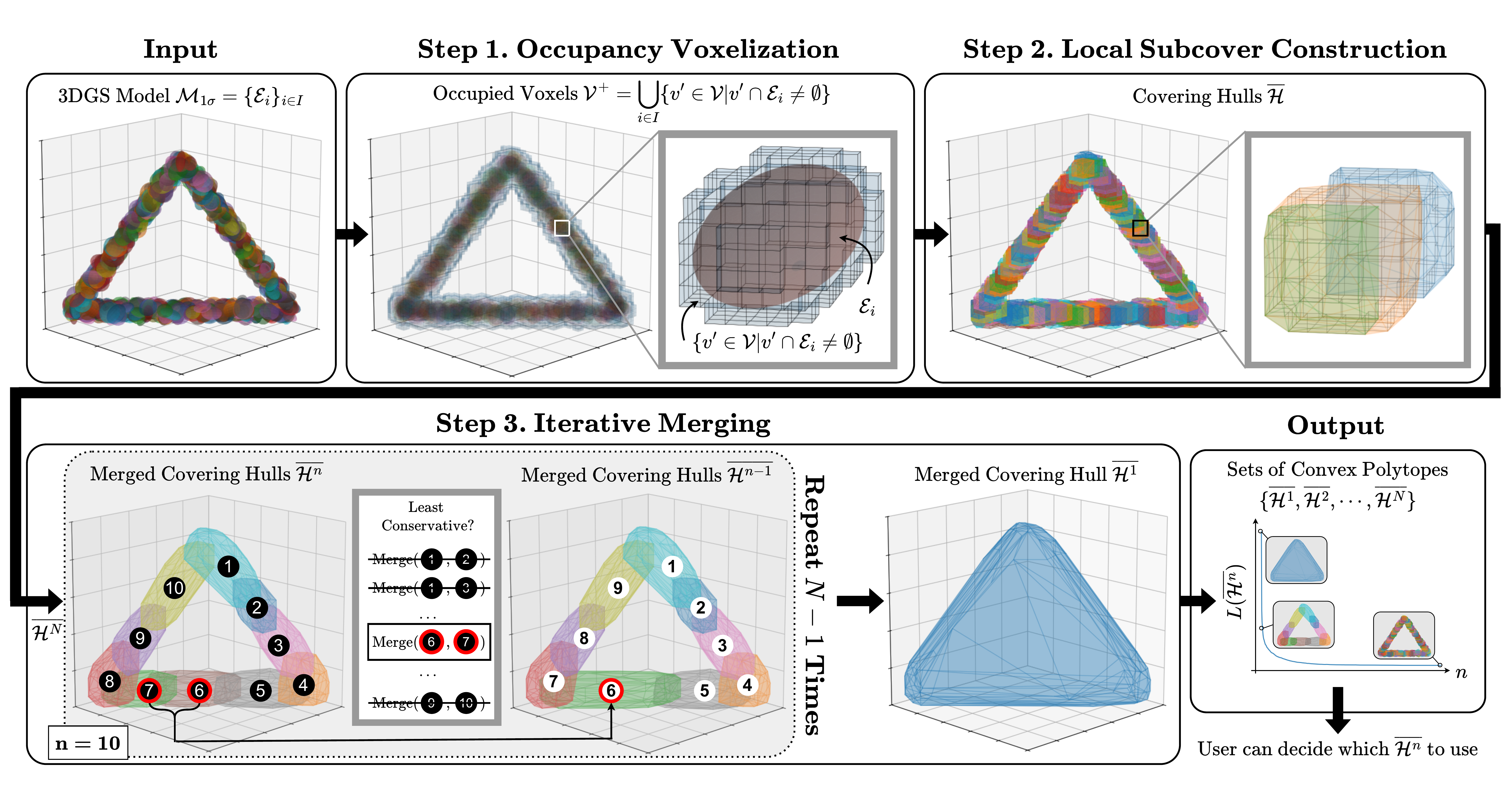}
  \vspace{-5mm}
  \caption{\textbf{
  Compression Algorithm}: Given a 3DGS model, \method{} extracts occupancy information into a voxel grid (\Cref{subsec:occupancy_grid}), constructs a local sub-cover of the occupied voxels (\Cref{subsec:local_subcover}), and iteratively merges the cover elements to incrementally reduce the number of convex hulls (\Cref{subsec:iterative_merging}). 
  }
  \vspace{-5mm}
  \label{fig:method}
\end{figure*}

\par
Since the optimization problem described in \cref{sec: Problem Statement}
is difficult to solve exactly, we present an 
algorithm, \method{}, to compute an approximation $\Tilde{\chullset}$ of $\chullset^\star$ given a 3DGS model $\gsellipsoid$ (\Cref{fig:method}). \method{} proceeds in three steps: (a) Deriving a voxel-based occupancy grid $\occvoxels$ from the cloud of 3DGS ellipsoids (\Cref{subsec:occupancy_grid}); (b) Constructing 
a collection of convex sets $\hullsubcover$ which covers\footnote{Given a set $S$, a collection of sets $\mathcal{T}$ is called a \textit{cover} of $S$ if, for each point $x \in S$, there exists some set $T \in \mathcal{T}$ such that $x \in T$.
  }
the occupied voxels  (\Cref{subsec:local_subcover}); (c) Iteratively and greedily merging local groups of occupied voxels and taking their convex hulls to produce compressed 
collections of polytopes (\Cref{subsec:iterative_merging}). As a result, we obtain provable overapproximations of obstacle sets with varying degrees of conservativeness, as shown in \cref{fig:method}.

\subsection{Occupancy Voxelization}
\label{subsec:occupancy_grid}

\par
First, we extract a voxel-based occupancy grid $\occvoxels$ from the 3DGS model $\gsellipsoid$,
which already achieves some compression because many ellipsoids often overlap and can be covered by a small number of occupied voxels.
We define a voxel $v \subset \R^3$ with center $p_v \in \R^3$ and grid size $g$ by:
\begin{equation}
\label{eq:voxel}
v:= \Big\{p' \in \R^3\mid \|p' -p_v\|_{\infty}\leq\frac{\gridsize}{2} \Big\},
\end{equation}
and let $\allvoxels$ denote the set of all voxels in $\R^3$ whose 8 corners all have coordinates that are multiples of $g$.

We aim to construct a sparse voxel occupancy grid $\occvoxels \subseteq \allvoxels$ from the 3DGS model $\gsellipsoid$ which maintains only voxels that overlap with some ellipsoid $\ellipsoid_i$ in $\gsellipsoid$, i.e.,
\begin{equation}
\label{eq:sparse_voxel}
    \occvoxels\defeq\{v\in\allvoxels \mid \exists \hspace{0.5mm} i \in I \text{ s.t. } v \cap \ellipsoid_i \ne \phi \}.
\end{equation}
To construct $\occvoxels$, it suffices to evaluate the following \textit{occupancy map} $\occmap:\allvoxels\rightarrow\{0,1\}$ over all $v \in \allvoxels$: 
\begin{equation}
\label{eq:occupancy}
    \occmap(v) =  \mathds{1}\big[ \exists \hspace{0.5mm} i \in I \text{ s.t. } v \cap \ellipsoid_i \ne \phi \big],
\end{equation}
\looseness-1since $\occvoxels = \{v \in \allvoxels \mid O(v) = 1 \}$.
In turn, for a given voxel $v$ and an ellipsoid $\ellipsoid_i$, we can determine if $v \cap \ellipsoid_i \ne \phi$ by evaluating whether $\min_{p\in v} (p-c_i)^\top \Sigma_i^{-1} (p-c_i) \leq 1$,
which we can compute with a small quadratic program.
However, na\"ively evaluating whether $v$ intersects every possible $\ellipsoid_i$ is computationally heavy, especially when many ellipsoids and/or voxels are used in the scene description, as is the case when the scene spans a large area, or the grid resolution is high.
To reduce computational cost, we leverage the fact that each voxel often intersects with only a small number of ellipsoids, which allows us to avoid solving a majority of the optimization problems.
Concretely, we construct axis-aligned bounding boxes $\aabb_i$ containing each $\ellipsoid_i$, and, for each voxel $v$, identify a subset of ellipsoids which may overlap with $v$, as indexed by $I_v := \{i \mid v\cap \aabb_i\neq\emptyset\}$. We then evaluate:
\begin{equation}
\label{eq:occupancy_filtered}
\occmap(v)=\mathds{1}\Big[\min_{i\in I_v}\min_{p\in v} (p-c_i)^\top \Sigma_i^{-1} (p-c_i) \leq1\Big].
\end{equation}
After this step, we have a sparse set of occupied voxels $\occvoxels$ covering all obstacles in the original 3DGS scene.
Note that, if an input scene were provided in any format other than 3DGS, \method{} would first extract an occupancy grid $\occvoxels$ and would then proceed as usual.

\subsection{Local Subcover Construction}
\label{subsec:local_subcover}

Although the sparse voxel grid $\occvoxels$ constructed in \cref{subsec:occupancy_grid} covers the obstacle sets, the cardinality of $\occvoxels$ may be too high to satisfy the hardware limitations underlying the constraint \eqref{eq:c2_budget}.
To address this issue, we use $\occvoxels$ to construct a collection $\hullsubcover$ of convex sets, with lower cardinality than $\occvoxels$, which provably covers but does not excessively over-approximate the unsafe space $\Chiunsafegs$ 
(in light of the objective \eqref{eq:objective} and constraint \eqref{eq:c1_overapprox}). 

Concretely, for each $v\in\occvoxels$, we first define the cluster $\voxcover_v \subseteq \occvoxels$ of voxels within a $bg$ neighborhood of $v$:
\begin{equation}
\label{eq:cover_element}
    \voxcover_v = \{v'\in\occvoxels \mid \|p_{v'}-p_v\|_{\infty}\leq b\gridsize\},
\end{equation}
where $b \in \N$ is a design choice and $g$ is the voxel side length as in \eqref{eq:voxel}. 
Let $\voxcover := \{\voxcover_v \mid v \in \occvoxels \}$ be the set of all such voxel clusters, which covers $\occvoxels$ by construction. 
To facilitate the construction of a cover for $\Chiunsafegs$ with lower cardinality than $\occvoxels$, we apply the weighted set cover algorithm \cite{chvatal1979greedy} (Remark \ref{Remark: Weighted Set Cover Algorithm}) to extract a sub-cover $\voxsubcover$ of $\occvoxels$ from $\voxcover$, i.e., a sub-collection $\voxsubcover \subseteq \voxcover$ which covers $\occvoxels$. Finally, we define $\hullsubcover$ to be the collection of convex hulls of voxel clusters in $\voxsubcover$, i.e., we set $\hullsubcover := \{\hull(\voxsubcover_v) \mid \voxsubcover_v \in \voxsubcover \}$. By construction, $\hullsubcover$ covers $\occvoxels$ (thus satisfying \eqref{eq:c1_overapprox}) while including minimal volume outside of $\occvoxels$, 
up to the approximation error of the weighted set cover algorithm.

\begin{remark}(Application of the Weighted Set Cover Algorithm \cite{chvatal1979greedy}.)
\label{Remark: Weighted Set Cover Algorithm}
Let a set $S$, a cover $\mathcal{T}$ of $S$ consisting of subsets of $S$, 
and a weight function $W: \mathcal{T} \ra \R$ be given. The weighted set cover algorithm of \cite{chvatal1979greedy} produces a sub-cover $\overline{\mathcal{T}}$ which approximates the optimal subcover with respect to the original cover $\mathcal{T}$ and the weight function $W$, given by:

\vspace{-13pt}
\begin{equation}\nonumber
\label{eq:setcover_opt}
    \mathcal{T}^\star\defeq\argmin_\mathcal{T'}\Bigg\{ \sum_{\mathcal{T}_i'\in\mathcal{T}'}W(\mathcal{T}_i')\bigmid{ }\mathcal{T}'\subset \mathcal{T},\arbset \subseteq \bigcup\limits_{\mathcal{T}_i'\in\mathcal{T}'}\mathcal{T}_i'\bigg\},
\end{equation}
which is NP-complete to compute \cite{chvatal1979greedy}. In the context of \cref{subsec:local_subcover}, to compute the sub-cover $\voxsubcover$, we take $S = \occvoxels$ and $\mathcal{T} = \voxcover$, and define the weight function $W: \voxcover \ra \R$ as: 
\begin{align} 
\label{eq:setcover_W}
    W(\voxcover_v)\defeq\vol\Big(\hull(\voxcover_v) \Big\backslash \bigcup \{v' \mid v' \in \occvoxels \} \Big).
\end{align}
Note that $W$ approximates the volume of unoccupied space $\allvoxels \backslash \occvoxels$ 
included in the hull of $\voxcover_v$, which quantifies the extent of obstacle set over-approximation.
\end{remark}

\subsection{Iterative Merging}
\label{subsec:iterative_merging}
\par
The final step of \method{} is to refine $\voxsubcover$ so that $\hullsubcover := \{\hull(\voxsubcover_v) \mid \voxsubcover_v \in \voxsubcover \}$ satisfies the hardware constraints \eqref{eq:c2_budget}.
We iteratively merge pairs of elements $\voxsubcover_v$ of the sub-cover $\voxsubcover$ until \eqref{eq:c2_budget} is satisfied.
This process facilitates an explicit visualization of the relation between the number of polytopes and the objective function $L$ (see \eqref{eq:objective}), 
which allows users to navigate the tradeoff between hardware capacity constraints and obstacle over-approximation conservativeness.

Let $N := |\voxsubcover|$, and 
let the sub-cover and convex hulls before merging be respectively denoted by:
\begin{equation}
\label{eq:mergepair_init}
\voxsubcovern{N} \defeq \voxsubcover = \{\voxsubcoverni{N}{i}\}_{i \in I^0}
\text{,\quad }
\hullsubcovern{N} \defeq \hullsubcover = \{\hullsubcoverni{N}{i}\}_{i \in I^0},
\end{equation}
whose elements are now indexed from a new index set $I^0$,
instead of using a voxel $v$.
We reuse the weight $W$ in \eqref{eq:setcover_W} and select a pair $(i^\star,j^\star)$ with minimum weight, as follows:
\begin{equation}
\label{eq:merge}
(i^\star,j^\star)=\argmin_{i,j \in I^0}
W(\voxsubcoverni{N}{i}\cup\voxsubcoverni{N}{j}) .
\end{equation}
Then, we merge the voxel clusters indexed $i^\star$ and $j^\star$, i.e., $\voxsubcoverni{N}{i^\star}$ and $\voxsubcoverni{N}{j^\star}$, into the following new sub-cover (see also \say{Step 3} in Fig. \ref{fig:method}):
\begin{equation}
\label{eq:mergedcover}
\voxsubcovern{N-1}\defeq\Big(\voxsubcovern{N}\setminus\{\voxsubcoverni{N}{i^\star},\voxsubcoverni{N}{j^\star}\}\Big)\cup\{\voxsubcoverni{N}{i^\star}\cup\voxsubcoverni{N}{j^\star} \}
\end{equation}
indexed by $I^1$, and a new set of convex hulls
\begin{equation}
\label{eq:mergedhull}
\hullsubcoverni{N-1}{i} \defeq \{\hull(\voxsubcoverni{N-1}{i})\}_{i \in I^1}.
\end{equation}
We repeat the above process
to acquire $\voxsubcovern{n}$ and $\hullsubcovern{n}$ for $n=N-2, N-3,\cdots{},1$, with $|\hullsubcovern{n}|$ decreasing by one per iteration.

The memory and compute complexities of each iteration as described above grow quadratically with the sub-cover size, so the merging process can become costly when the initial sub-cover size is large.
To avoid incurring this quadratic cost, we construct an axis-aligned bounding box $\aabb_i^n$ for each sub-cover element $\voxsubcoverni{n}{i}$, and evaluate $W(\voxsubcoverni{n}{i} \cup\voxsubcoverni{n}{j})$ 
only when
$\aabb_i^n\cap \aabb_j^n\neq\emptyset$.
We thus only consider merging sub-cover element pairs
$(\voxsubcoverni{n}{i}, \voxsubcoverni{n}{j})$ that are 
in each other's vicinity, since, for 
all other
pairs, $\hull(\voxsubcoverni{n}{i} \cup \voxsubcoverni{n}{j})$ is likely to contain safe set regions, and thus the weight $W(\voxsubcoverni{n}{i}, \voxsubcoverni{n}{j})$ is likely large.

\subsection{Guarantees of Obstacle Set Overapproximation}
\label{subsec: Guarantees of Obstacle Set Overapproximation}

Since the steps of our \method{} method generate progressively conservative overapproximations of the unsafe set $\Chiunsafegs$, the input 3DGS model $\gsellipsoid$ is provably covered by each of the polytope covers $\hullsubcovern{n}$. For a mathematically rigorous argument, see \cref{Thm: Guarantees of Obstacle Set Overapproximation} in the appendix.
\section{Experiments}
\label{sec: Experiments}

\begin{figure*}[!t]
  \centering
  \vspace{1mm}
  \hspace{-5mm}
  \includegraphics[width=0.9\textwidth]{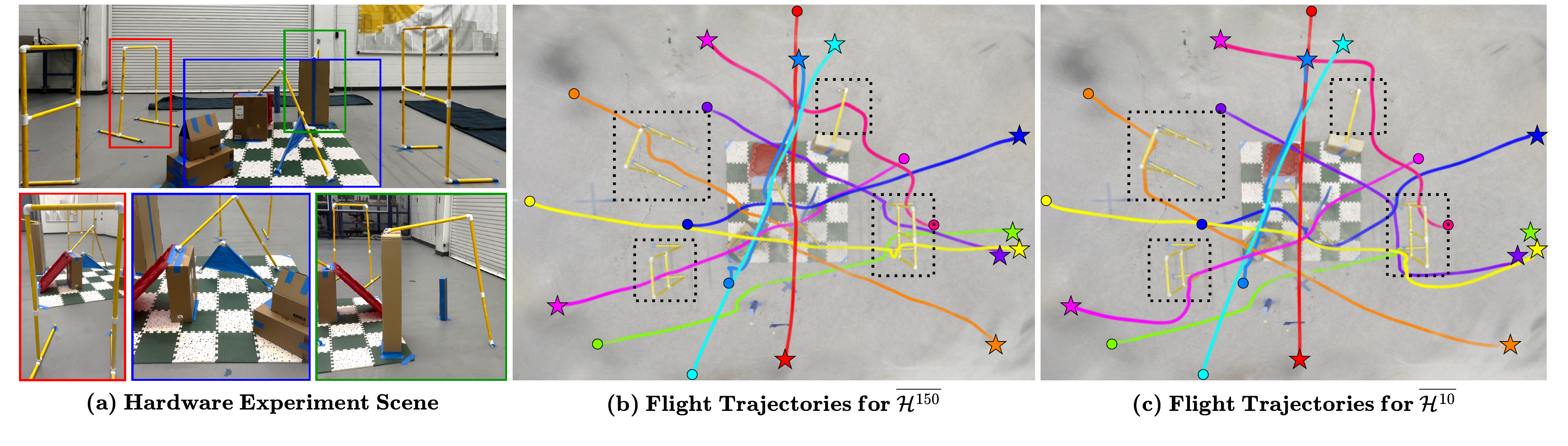}
  \vspace{-6mm}
  \caption{\looseness-1\textbf{Hardware Experiments.} \textbf{(a)} \textit{Top}: The scene in which we perform drone experiments. \textit{Bottom}: 3 different parts of the scene from different views. \textbf{(b-c)} Visualization of 10 different trajectories of the drone, when the obstacles are represented using (b) 150 and (c) 10 convex hulls, respectively (see \cref{fig:teaser}). Circles represent the start positions, and stars denote the goal positions. Dotted boxes highlight the key differences between the two plots, where the drone successfully passes through the hoop in (b) while taking a detour around the hoop (orange trajectory) or even failing to navigate further (green trajectory) in (c).}
  \vspace{-1mm}
  \label{fig:scene_trajectories}
\end{figure*}

We evaluate \method{} through both simulation and hardware experiments.
We describe details of the simulation (\Cref{subsec:simulation}) and hardware (\Cref{subsec:hardware}) experiments, followed by baselines that we compare against and metrics that we report (\Cref{subsec:baselines_and_metrics}), and our experiment results (\Cref{subsec:results}).

\subsection{Simulation}
\label{subsec:simulation}

We perform simulations using the mip-NeRF360 dataset \cite{Barron2022mipNeRF360}, which contains 9 scenes, 5 outdoor and 4 indoor, and is widely used to evaluate 3D scene reconstruction methods.
For each scene, we
apply \method{} to $\gsellipsoid$ to acquire $\{\hullsubcovern{n}\}_{n=1}^N$.
After compression, we sample 100 random start and goal state pairs $\{(\startm{m},\goalm{m})\}_{m=1}^{100}$ from the scene such that $\goalm{m}$ is at least a certain distance away from $\startm{m}$, and all start and goal states belong to $\Chisafegs$.

Then, we simulate navigation from $\startm{m}$ to $\goalm{m}$ in $\hullsubcovern{n}$ for various choices of $n$ with a discrete time-step $\Deltat=0.01$.
For simplicity,
we assume a spherical 3D agent of radius $r$ with single-integrator dynamics 
$x_{t+\Delta t} = x_t + \cbfut \Delta t$, with the control input $\cbfut$ computed as follows.
First, we compute a nominal control input $\nominalu$ via:
\begin{subequations} 
\label{eq:nominal_clf}
\begin{align}
    \nominalu := \argmin_{u} \ &\textstyle\frac{1}{2}\|u\|_2^2 \\
    \text{s.t.} \ &L_fV(x)+L_gV(x)u\leq-cV(x),
\end{align}
\end{subequations}
\looseness-1where $\clf := \frac{1}{2}\|x-\goal\|_2^2$ is a control Lyapunov function (CLF) \cite{AmesCoogan2019CBFs, Romdlony2014UnitingCLFAndCBF}, for some constant $c>0$,
directing the agent towards the goal state.
Second, since the nominal controller is unaware of the obstacles in the scene, to prevent collision, $\nominalu$ is filtered by the CBF $\psi_r$ defined in Sec. \ref{subsec:prelim_cbf_polytope},
which encodes $\hullsubcovern{n}$ as the set of polytopes representing obstacles:
\begin{subequations}
\label{eq:cbf_filter}
\begin{align} 
    \cbfu := \argmin_{u} \ &\textstyle\frac{1}{2}\|u-\nominalu\|_2^2 \\
    \text{s.t.} \ &L_f\psi_r(x)+L_g\psi_r(x)u\geq-\beta\psi_r(x) .
\end{align}
\end{subequations}
Finally, we update the agent state using 
$\cbfu$, i.e., $x_{t+\Deltat}=x_{t}+\cbfu\Deltat.$
From $\start$, we unroll the dynamics following this state update until either 
\begin{enumerate*}[label=(\alph*)]
  \item the agent is sufficiently close to $\goal$, or
  \item the agent stays near a position with little or no state update for a certain number of time-steps, which occasionally occurs due to the myopic nature of the nominal controller and safety filter.
\end{enumerate*}
We repeat this simulation for the same set of start and goal pairs across varying values of $n$.

\subsection{Hardware Experiments}
\label{subsec:hardware}

We also perform hardware experiments on a Crazyflie2.1+ drone in a custom indoor scene (\Cref{fig:scene_trajectories}).
Prior to flight, we train a 3DGS model offline from video footage of the scene, and acquire $\{\hullsubcovern{n}\}_{n=1}^N$ similarly to our simulation experiments.
After occupancy voxelization, we filter out voxels that overlap with less than 30 unique ellipsoids.
This step was included to remove ``floaters", which are Gaussians floating around in free, unoccupied space -- common artifacts found in 3DGS models \cite{feifloaters}.
This filtering could be skipped entirely if the 3DGS model were free of floaters, which is an active area of research \cite{xiong2025sparsegs, wang2025stablegs}.
Then, we modify the drone firmware to run the CLF nominal controller and the CBF-based safety filter on-board during flight.
Specifically, the controller computes $\nominalu$ via \eqref{eq:nominal_clf} with the drone's state estimates, which are produced by its extended Kalman filter (EKF) based on the real-time position information broadcast from a Vicon motion-capture system.
The controller reads off $\hullsubcovern{n}$ stored in the flash memory of the drone, applies the safety filter to $\nominalu$ following \eqref{eq:cbf_filter}, and sets velocity way-points.
While on-board state estimation is also desirable, we leave this to future work and assume exact position information is available in our experiments.

\looseness-1We fly the drone with this custom controller for 10 different $(\start, \goal)$ pairs, with start states on the ground (i.e. $z=0$), and goal states chosen such that obstacles lie along the drone's trajectory.
We repeat this experiment for $n \in \{10,30,50,100,150\}$ and configure
the drone to run the controller at different time-step frequencies for different values of $n$, as the compute cost of the CBF filter increases with $n$.

\vspace{-6pt}
\subsection{Metrics and Baselines}
\label{subsec:baselines_and_metrics}

\begin{table*}[ht]
    \centering
    \caption{\textbf{Memory comparison.} Each row shows the number of ellipsoids or convex hulls needed to represent obstacles, and the file size in memory. The number of hulls is set to be equal for SuGaR + CoACD and \method{} for fair comparison. 
    }
    \vspace{-2mm}
    \label{tab:memory}
    \begin{tabular}{ccccccccccc}
        \toprule
                && \textbf{Bicycle}
                & \textbf{Flowers}
                & \textbf{Garden}
                & \textbf{Stump}
                & \textbf{Treehill}
                & \textbf{Room}
                & \textbf{Counter}
                & \textbf{Kitchen}
                & \textbf{Bonsai}
                \\
        \midrule 
            \multirow{2}{*}{\centering SaferSplat \cite{ChenSchwager2025SaferSplat}
            } 
                & Ellipsoids
                & 1.50M
                & 1.54M
                & 1.61M
                & 1.47M
                & 1.44M
                & 1.06M
                & 1.43M
                & 1.37M
                & 1.40M
            \\
                & Size (MB)
                & 671.1
                & 692.0
                & 718.7
                & 659.8
                & 644.2
                & 473.6
                & 640.3
                & 612.5
                & 626.7
            \\
        \midrule
                SuGaR \cite{Guedon2023SuGaR}
                & Hulls
                & 3,296
                & 2,624
                & 1,006
                & 2,458
                & 2,681
                & 454
                & 1,367
                & 901
                & 2,080
            \\
                +
                & Half-planes in Hulls
                & 189K
                & 149K
                & 58K
                & 150K
                & 150K
                & 26K
                & 76K
                & 53K
                & 119K
            \\
                CoACD \cite{Wei2022ACDFor3DMeshes}
                & Size (MB)
                & 13.5
                & 10.6
                & 4.2
                & 10.7
                & 10.7
                & 1.8
                & 5.6
                & 3.8
                & 8.5
            \\
        \midrule
            \multirow{3}{*}{\centering Ours} 
                & Hulls
                & 3,296
                & 2,624
                & 1,006
                & 2,458
                & 2,681
                & 454
                & 1,367
                & 901
                & 2,080
            \\
                & Half-planes in Hulls
                & 258K
                & 209K
                & 80K
                & 207K
                & 217K
                & 33K
                & 82K
                & 69K
                & 172K
            \\
                & Size (MB)
                & 60.7
                & 49.8
                & 20.1
                & 67.3
                & 58.8
                & 8.5
                & 10.6
                & 17.4
                & 39.4
            \\
        \bottomrule
        \vspace{-6mm}
    \end{tabular}
\end{table*}

\begin{figure*}[ht]
  \centering
  \hspace{-4mm}
  \includegraphics[width=1.0\textwidth]{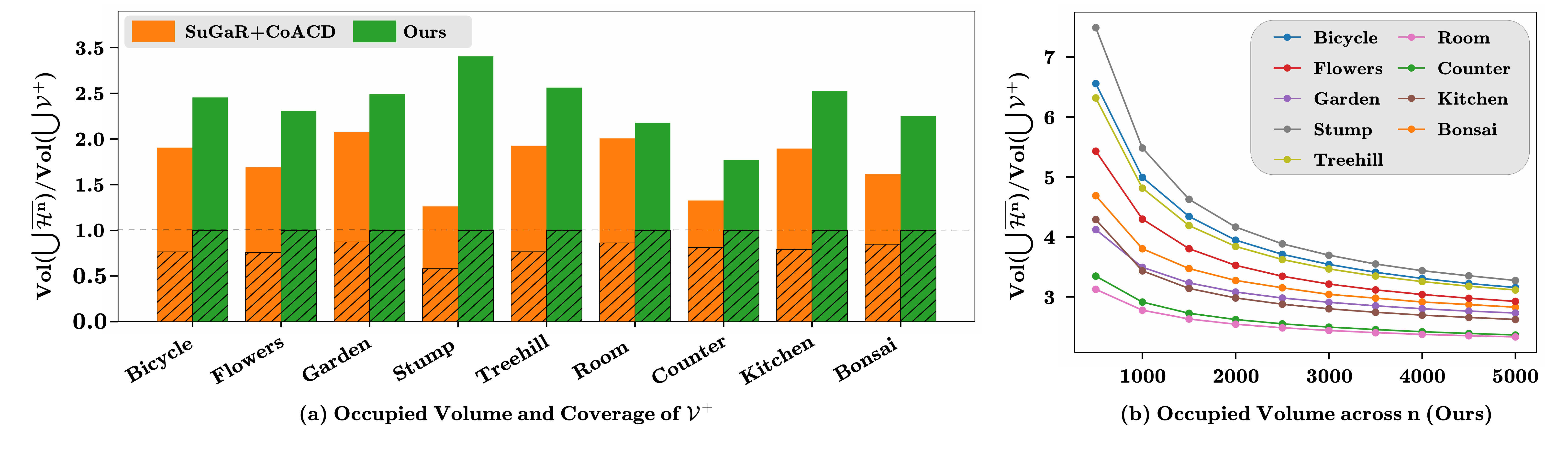}
  \vspace{-4mm}
  \caption{\looseness-1\textbf{
  Conservativeness of Convex Hulls.} (a) The volume of the union of convex hulls, normalized by the volume of the union of occupied voxels. Hatched bars show the volume of the intersection of the two. Hatched bars reaching 1.0 mean all occupied voxels are included in the union of convex hulls, while those below 1.0 mean the convex hulls do not entirely cover all occupied voxels; SuGaR + CoACD fails to cover all occupied voxels. Portions of the bar without hatches are volumes of obstacle-free region included in the hulls, which represent conservativeness. (b) The trade-off between the number of convex hulls $n$ and the conservativeness of the convex hull sub-cover. Using fewer convex hulls results in the hulls covering a larger obstacle-free region.}
  \vspace{-5mm}
  \label{fig:volume}
\end{figure*}

\begin{figure}[ht]
  \centering
  \hspace{-5mm}
  \includegraphics[width=0.5\textwidth]{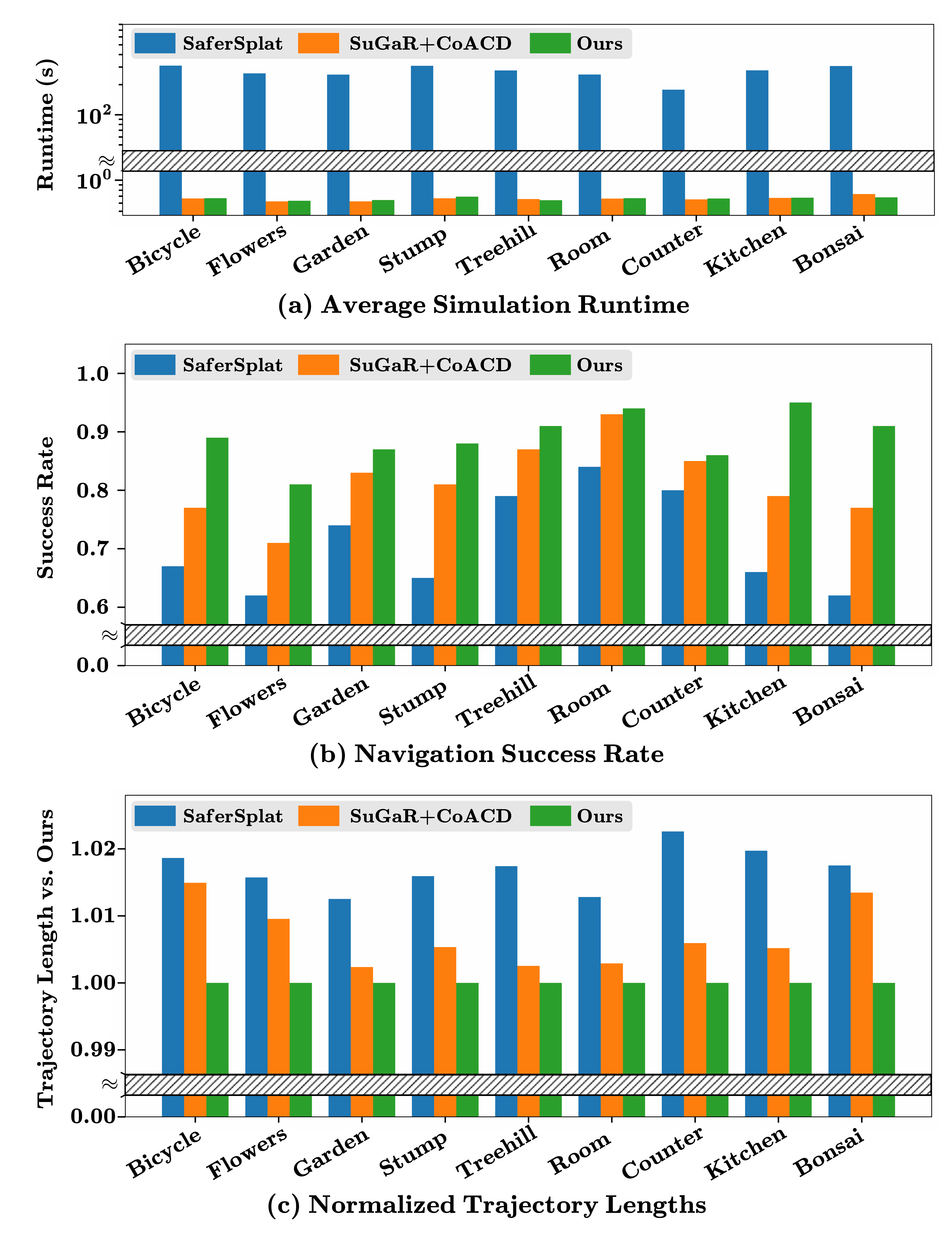}
  \vspace{-5mm}
  \caption{\looseness-1\textbf{Simulation Results.} (a) \textit{Average simulation runtime over 100 trajectories on an RTX A6000 GPU}: Methods using polytopes (SuGaR + CoACD, ours) reduce computation relative to SaferSplat, which uses dense ellipsoid clouds. (b) \textit{Fraction of trajectories that reach within $1\%$ distance to $\goal$}: 
  Our method has the highest success rate.
  (c) \textit{Length of successful trajectories across methods, divided by that of ours and averaged over trajectories}: Our method empirically achieves a shorter trajectory length compared to baselines. (Only trajectories for which all 3 methods succeed are used.)}
  \vspace{-6mm}
  \label{fig:runtime}
\end{figure}

In simulation, we compare our \method{} against two baselines for safe navigation on top of a 3DGS model.
The first baseline is SaferSplat \cite{ChenSchwager2025SaferSplat}, which uses a 3DGS model $\gsellipsoid$ for CBF-based safe navigation.
The second baseline
consists of SuGaR \cite{Guedon2023SuGaR}, a state-of-the-art technique for extracting a mesh from a 3DGS scene, coupled with CoACD \cite{Wei2022ACDFor3DMeshes}, a method that 
converts a mesh into 
a set of convex polytopes.
SuGaR tends to generate 3DGS models with a smaller number of Gaussians, since it introduces 
a
regularization term
to align Gaussians to object surfaces.
For fair comparison, we run all simulations starting from the same 3DGS model trained by SuGaR.

To evaluate the reduction in hardware cost enabled by \method{} relative to our baselines, we report the memory footprint and computation speed of each method.
For memory, we report the total number of ellipsoids and polytopes and the file size in bytes, while for computation speed, we report simulation runtime.
In addition, we quantify the conservativeness of each representation, as measured by the total volume of space occupied by the polytopes, which we approximate through Monte Carlo sampling.
Finally, we report other relevant statistics of the trajectories, including trajectory lengths and goal-reaching success rates.

\subsection{Results}
\label{subsec:results}

\looseness-1\Cref{tab:memory} shows the memory footprint of storing ellipsoids needed for SaferSplat \cite{ChenSchwager2025SaferSplat}, convex hulls acquired from the Gaussians using SuGaR \cite{Guedon2023SuGaR} and CoACD \cite{Wei2022ACDFor3DMeshes}, and finally the convex hulls acquired from \method{} in each scene in mip-NeRF360 \cite{Barron2022mipNeRF360}.
The number of convex hulls for SuGaR + CoACD is chosen as the maximum number attainable by tuning the parameters of CoACD when applying it to the mesh constructed by SuGaR, and we use the same number of convex hulls for \method{} to ensure fair comparison.
Across all scenes, we observe up to 10$\times$ reduction in memory footprint when convex polytopes, rather than ellipsoids, are stored, 
since 3DGS models contain a large number of Gaussians.
In addition, although the same number of convex hulls are used for both polytope-based methods, \method{} occupies more memory than SuGaR + CoACD because its polytopes are composed of more half-planes.
Nevertheless, \method{} provides an over-approximation of the obstacles covering all voxels in $\occvoxels$ while SuGaR + CoACD covers only a strict subset of obstacles, which does not guarantee safety (\Cref{fig:volume}a).
Moreover, \method{} allows users to reduce conservativeness by using more convex hulls (via the trade-off curves in \Cref{fig:volume}b); further improvements can be attained by using a smaller voxel size $\gridsize$ or local cover size $b$.

The smaller number of polytopes compared to ellipsoids also leads to improved compute speeds.
On an NVIDIA RTX A6000 GPU, we observe $\approx$100x speed-up in average simulation runtime in polytope-based representations over ellipsoids (see \Cref{fig:runtime}a).
We also empirically find that the success rate over 100 simulations increases across all scenes under \method{} compared to the baselines (\Cref{fig:runtime}b), and that goal-reaching is achieved more effectively with shorter trajectories (\Cref{fig:runtime}c).
We hypothesize that the over-approximation of the obstacles introduced by the polytopes in \method{} may help regularize the unsafe set defined by the ellipsoids, which makes the agent less susceptible to ``traps" from which the agent can struggle or fail to escape.

Our hardware experiment results are given in \Cref{fig:scene_trajectories} and \Cref{tab:hw_success_cbf}.
We fly the Crazyflie with 
a CLF-based nominal controller and a CBF filter on-board every 1, 2, 2, 3, and 4 stabilizer loops under $\hullsubcovern{n}$, for $n\in\{10,30,50,100,150\}$ respectively.
Since the stabilizer loop of a Crazyflie 2.1+ drone runs at 500Hz 
\cite{Crazyflie2025Documentation}
when 
an EKF is used for pose estimation,
our controller runs at 500, 250, 250, 167, and 125Hz, for each number $n$ of polytopes. These speeds are set at roughly the maximum feasible for the drone's CPU.

\looseness-1The resulting trajectories using 150 and 10 convex hulls, displayed in \Cref{fig:scene_trajectories}b-c, show that the drone is able to navigate through the hoop obstacles when $n=150$, but flies around them when the convex hulls are merged and eventually cover the interior of the hoops.
\Cref{tab:hw_success_cbf} reports additional quantitative results, which include (i) success rate, measured by the fraction of start-goal pairs for which the drone reached within 5 cm of the goal without getting stuck, and (ii) the minimum distance to the set of 150 polytopes representing ground truth obstacles, after visually verifying the tight localization of the polytopes around objects. Since the drone usually flies closer to obstacles when using more polytopes due to less over-approximation, we see a decreasing trend in the second row.
\vspace{-1mm}
\section{Conclusion, Limitations, and Future Work}
\label{sec: Conclusion and Future Work}

We present
\method{}, a method for over-approximating obstacles in a 3DGS scene using a desired number of convex polytopes for safe CBF-based navigation. \method{} paves the way for integrating state-of-the-art photorealistic scene modeling with real-time onboard perception
and collision-free motion planning on highly resource-constrained robotic platforms.
It guarantees safety by ensuring obstacles in a given scene model are over-approximated, while gracefully trading off computational complexity with the degree of over-approximation or conservativeness. 
Future work will address limitations of \method{}, including (i) extension to dynamic scenes with moving obstacles, (ii) incorporating on-board localization, for instance via SLAM or odometry, and (iii) enabling the scene map to update and expand as the robot moves beyond an initial scene model.

\begin{table}[ht]
\vspace{-8pt}
    \centering
    \caption{\textbf{Drone Trajectory Statistics} in hardware experiments.\vspace{-8pt}}
    \label{tab:hw_success_cbf}
    \begin{tabular}{cccccc}
        \toprule
                & \multicolumn{5}{c}{\textbf{Number of Convex Hulls}}
            \\
                & 10
                & 30
                & 50
                & 100
                & 150
            \\
        \midrule 
                \makecell{Success Rate}
                & 0.7
                & 0.8
                & 1.0
                & 0.8
                & 0.9
            \\
                \makecell{Min Obstacle Distance}
                & 0.268
                & 0.253
                & 0.098
                & 0.117
                & 0.097
            \\
        \bottomrule
        \vspace{-9mm}
    \end{tabular}
\end{table}
\bibliography{references}

@article{mueller2022instant,
    author = {Thomas M\"uller and Alex Evans and Christoph Schied and Alexander Keller},
    title = {{Instant Neural Graphics Primitives with a Multiresolution Hash Encoding}},
    journal = {ACM Trans. Graph.},
    issue_date = {July 2022},
    volume = {41},
    number = {4},
    month = jul,
    year = {2022},
    pages = {102:1--102:15},
    articleno = {102},
    numpages = {15},
    publisher = {ACM},
    address = {New York, NY, USA}
}

@article{Kerbl2023GaussianSplatting3D,
author = {Kerbl, Bernhard and Kopanas, Georgios and Leimkuehler, Thomas and Drettakis, George},
title = {{3D Gaussian Splatting for Real-Time Radiance Field Rendering}},
year = {2023},
issue_date = {August 2023},
publisher = {Association for Computing Machinery},
address = {New York, NY, USA},
volume = {42},
number = {4},
issn = {0730-0301},
journal = {ACM Trans. Graph.},
month = jul,
articleno = {139},
numpages = {14}
}

@inproceedings{mildenhall2020nerf,
 title={{NeRF: Representing Scenes as Neural Radiance Fields for View Synthesis}},
 author={Ben Mildenhall and Pratul P. Srinivasan and Matthew Tancik and Jonathan T. Barron and Ravi Ramamoorthi and Ren Ng},
 year={2020},
 booktitle={ECCV},
}

@inproceedings{YuFridovichkeil2021Plenoxels,
      title={{Plenoxels: Radiance Fields without Neural Networks}},
      author={Sara Fridovich-Keil and Alex Yu and Matthew Tancik and Qinhong Chen and Benjamin Recht and Angjoo Kanazawa},
      year={2022},
      booktitle={CVPR},
}

@article{Guedon2023SuGaR,
  title={{SuGaR: Surface-Aligned Gaussian Splatting for Efficient 3D Mesh Reconstruction and High-Quality Mesh Rendering}},
  author={Gu{\'e}don, Antoine and Lepetit, Vincent},
  journal={CVPR},
  year={2024}
}

@inproceedings{Ali2024TrimmingtheFat,
  author    = {Muhammad Salman Ali and Maryam Qamar and Sung-Ho Bae and Enzo Tartaglione},
  title     = {{Trimming the Fat: Efficient Compression of 3D Gaussian Splats through Pruning}},
  booktitle = {BMVC},
  publisher = {BMVA},
  year      = {2024},
}

@InProceedings{HansonTu2025PUP3DGS,
    author    = {Hanson, Alex and Tu, Allen and Singla, Vasu and Jayawardhana, Mayuka and Zwicker, Matthias and Goldstein, Tom},
    title     = {{PUP 3D-GS: Principled Uncertainty Pruning for 3D Gaussian Splatting}},
    booktitle = {CVPR},
    month     = {June},
    year      = {2025},
    pages     = {5949-5958},
}

@INPROCEEDINGS{AmesCoogan2019CBFs,
  author={Ames, Aaron D. and Coogan, Samuel and Egerstedt, Magnus and Notomista, Gennaro and Sreenath, Koushil and Tabuada, Paulo},
  booktitle={2019 18th European Control Conference (ECC)}, 
  title={{Control Barrier Functions: Theory and Applications}}, 
  year={2019},
  volume={},
  number={},
  pages={3420-3431},
}

@article{HsuHuFisac2024TheSafetyFilter,
   author = "Hsu, Kai-Chieh and Hu, Haimin and Fisac, Jaime F.",
   title = "{The Safety Filter: A Unified View of Safety-Critical Control in Autonomous Systems}", 
   journal= "Annual Review of Control, Robotics, and Autonomous Systems",
   year = "2024",
   volume = "7",
   pages = "47-72",
   publisher = "Annual Reviews",
   issn = "2573-5144",
   type = "Journal Article",
  }

@incollection{Unlu2023CBFLiDARInertialOdometry,
author = {Halil Utku Unlu and Dimitris Chaikalis and Vinicius Gonçalves and Anthony Tzes},
title = {{Control Barrier Functions and LiDAR-Inertial Odometry for Safe Drone Navigation in GNSS-Denied Environments}},
booktitle = {Motion Planning for Dynamic Agents},
address = {London},
year = {2023},
editor = {Zain Anwar Ali and Amber Israr},
chapter = {6},
}

@article{Zhou2024ControlBarrierAidedTeleoperationVisualInertialSLAM,
      title={{Control-Barrier-Aided Teleoperation with Visual-Inertial SLAM for Safe MAV Navigation in Complex Environments}}, 
      author={Siqi Zhou and Sotiris Papatheodorou and Stefan Leutenegger and Angela P. Schoellig},
      year={2024},
      eprint={2403.04331},
      journal={arXiv},
}

@INPROCEEDINGS{Sa2024PointCloudBasedCBFRegression,
  author={De Sa, Massimiliano and Kotaru, Prasanth and Sreenath, Koushil},
  booktitle={ICRA}, 
  title={{Point Cloud-Based Control Barrier Function Regression for Safe and Efficient Vision-Based Control}}, 
  year={2024},
  volume={},
  number={},
  pages={366-372},
}

@ARTICLE{Long2021LearningBarrierFunctionsWithMemoryforRobustSafeNavigation,
  author={Long, Kehan and Qian, Cheng and Cortés, Jorge and Atanasov, Nikolay},
  journal={IEEE Robotics and Automation Letters}, 
  title={{Learning Barrier Functions With Memory for Robust Safe Navigation}}, 
  year={2021},
  volume={6},
  number={3},
  pages={4931-4938},
}

@INPROCEEDINGS{Romdlony2014UnitingCLFAndCBF,
  author={Romdlony, Muhammad Zakiyullah and Jayawardhana, Bayu},
  booktitle={53rd IEEE Conference on Decision and Control}, 
  title={{Uniting Control Lyapunov and Control Barrier Functions}}, 
  year={2014},
  volume={},
  number={},
  pages={2293-2298},
}

@INPROCEEDINGS{Molnar2025NavigatingPolytopesWithSafetyACBFApproach,
  author={Molnar, Tamas G.},
  booktitle={IEEE Conference on Control Technology and Applications (CCTA)}, 
  title={{Navigating Polytopes with Safety: A Control Barrier Function Approach}}, 
  year={2025},
  volume={},
  number={}
}

@INPROCEEDINGS{ChenSchwager2025SaferSplat,
  author={Chen, Timothy and Swann, Aiden and Yu, Javier and Shorinwa, Ola and Murai, Riku and Kennedy, Monroe and Schwager, Mac},
  booktitle={ICRA}, 
  title={{A Control Barrier Function for Safe Navigation with Online Gaussian Splatting Maps}}, 
  year={2025},
  volume={},
  number={},
}

@ARTICLE{Chen2025SplatNav,
  author={Chen, Timothy and Shorinwa, Ola and Bruno, Joseph and Swann, Aiden and Yu, Javier and Zeng, Weijia and Nagami, Keiko and Dames, Philip and Schwager, Mac},
  journal={IEEE Transactions on Robotics}, 
  title={{Splat-Nav: Safe Real-Time Robot Navigation in Gaussian Splatting Maps}}, 
  year={2025},
  volume={41},
  number={},
  pages={2765-2784},
}

@article{Wei2022ACDFor3DMeshes,
author = {Wei, Xinyue and Liu, Minghua and Ling, Zhan and Su, Hao},
title = {{Approximate Convex Decomposition for 3D Meshes with Collision-aware Concavity and Tree Search}},
year = {2022},
issue_date = {July 2022},
publisher = {Association for Computing Machinery},
address = {New York, NY, USA},
volume = {41},
number = {4},
issn = {0730-0301},
journal = {ACM Trans. Graph.},
month = jul,
articleno = {42},
numpages = {18},
}

@article{Barron2022mipNeRF360,
    title={{Mip-NeRF 360: Unbounded Anti-Aliased Neural Radiance Fields}},
    author={Jonathan T. Barron and Ben Mildenhall and 
            Dor Verbin and Pratul P. Srinivasan and Peter Hedman},
    journal={CVPR},
    year={2022}
}

@article{chvatal1979greedy,
  title={{A Greedy Heuristic for the Set-covering Problem}},
  author={Chvatal, Vasek},
  journal={Mathematics of operations research},
  volume={4},
  number={3},
  pages={233--235},
  year={1979},
  publisher={INFORMS}
}

@ARTICLE{Chen2025GRaDNavPlusPlus,
  author={Chen, Qianzhong and Gao, Naixiang and Huang, Suning and Low, JunEn and Chen, Timothy and Sun, Jiankai and Schwager, Mac},
  journal={IEEE Robotics and Automation Letters}, 
  title={{GRAD-NAV++: Vision-Language Model Enabled Visual Drone Navigation With Gaussian Radiance Fields and Differentiable Dynamics}}, 
  year={2026},
  volume={11},
  number={2}
}

@inproceedings{chou2022safe,
  title={{Safe Output Feedback Motion Planning from Images via Learned Perception Modules and Contraction Theory}},
  author={Chou, Glen and Ozay, Necmiye and Berenson, Dmitry},
  booktitle={Algorithmic Foundations of Robotics XV},
  year={2022}
}

@article{dawson2022learning,
  title={Learning safe, generalizable perception-based hybrid control with certificates},
  author={Dawson, Charles and Lowenkamp, Bethany and Goff, Dylan and Fan, Chuchu},
  journal={IEEE Robotics and Automation Letters},
  volume={7},
  number={2},
  pages={1904--1911},
  year={2022},
  publisher={IEEE}
}

@ARTICLE{Adamkiewicz2022NerfNavVisionOnlyRobotNavigationinaNeuralRadianceWorld,
  author={Adamkiewicz, Michal and Chen, Timothy and Caccavale, Adam and Gardner, Rachel and Culbertson, Preston and Bohg, Jeannette and Schwager, Mac},
  journal={IEEE Robotics and Automation Letters}, 
  title={{Vision-Only Robot Navigation in a Neural Radiance World}}, 
  year={2022},
  volume={7},
  number={2},
  pages={4606-4613},
  doi={10.1109/LRA.2022.3150497}
}

@ARTICLE{KurenkovNFOMP,
  author={Kurenkov, Mikhail and Potapov, Andrei and Savinykh, Alena and Yudin, Evgeny and Kruzhkov, Evgeny and Karpyshev, Pavel and Tsetserukou, Dzmitry},
  journal={IEEE Robotics and Automation Letters}, 
  title={{NFOMP: Neural Field for Optimal Motion Planner of Differential Drive Robots With Nonholonomic Constraints}}, 
  year={2022},
  volume={7},
  number={4},
  pages={10991-10998},
}

@ARTICLE{Low2025SousVide,
  author={Low, JunEn and Adang, Maximilian and Yu, Javier and Nagami, Keiko and Schwager, Mac},
  journal={IEEE Robotics and Automation Letters}, 
  title={{SOUS VIDE: Cooking Visual Drone Navigation Policies in a Gaussian Splatting Vacuum}}, 
  year={2025},
  volume={10},
  number={5}
}

@INPROCEEDINGS{Tong2023NerfCBF,
  author={Tong, Mukun and Dawson, Charles and Fan, Chuchu},
  booktitle={IEEE International Conference on Robotics and Automation}, 
  title={{Enforcing Safety for Vision-based Controllers via Control Barrier Functions and Neural Radiance Fields}}, 
  year={2023},
  volume={},
  number={}
}

@article{feifloaters,
  title={3d gaussian splatting as new era: A survey},
  author={Fei, Ben and Xu, Jingyi and Zhang, Rui and Zhou, Qingyuan and Yang, Weidong and He, Ying},
  journal={IEEE Transactions on Visualization and Computer Graphics},
  year={2024},
  publisher={IEEE}
}

@article{wang2025stablegs,
  title={{StableGS: A Floater-free Framework for 3D Gaussian Splatting}},
  author={Wang, Luchao and Ren, Qian and Liao, Kaimin and Wang, Hua and Chen, Zhi and Tang, Yaohua},
  journal={arXiv preprint arXiv:2503.18458},
  year={2025}
}

@inproceedings{xiong2025sparsegs,
  title={SparseGS: sparse view synthesis using 3D Gaussian splatting},
  author={Xiong, Haolin and Muttukuru, Sairisheek and Xiao, Hanyuan and Upadhyay, Rishi and Chari, Pradyumna and Zhao, Yajie and Kadambi, Achuta},
  booktitle={2025 International Conference on 3D Vision (3DV)},
  pages={1032--1041},
  year={2025},
  organization={IEEE}
}

@misc{Crazyflie2025Documentation,
    author = {{Crazyflie 2.1+}},
    year = {2025},
	howpublished = {\url{https://www.bitcraze.io/products/crazyflie-2-1-plus/}},
	title = {{Crazyflie 2.1+ Documentation}},
}
\bibliographystyle{IEEEtran}
\appendix
\begin{theorem} \label{Thm: Guarantees of Obstacle Set Overapproximation}
For each $n \in \{1, \cdots, N\}$, $\hullsubcovern{n}$ covers $\Chiunsafegs$.
\end{theorem}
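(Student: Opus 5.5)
The plan is a chain of three inclusions, each matching one step of \method{}, followed by an induction over the merging steps. Here ``covers'' means that $\Chiunsafegs \subseteq \bigcup_{H \in \hullsubcovern{n}} H$.

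\textbf{Step 1 (voxelization).} We show that $\Chiunsafegs \subseteq \bigcup_{v \in \occvoxels} v$. Take $p \in \ellipsoid_i$ for some $i \in I$. The voxels in $\allvoxels$ tile $\R^3$, so $p$ lies in some $v \in \allvoxels$, and then $v \cap \ellipsoid_i \neq \emptyset$.

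It remains to check that the computed map \eqref{eq:occupancy_filtered} returns $\occmap(v) = 1$, so that the bounding-box prefilter does not discard $i$. Since $\ellipsoid_i \subseteq \aabb_i$, we get $p \in v \cap \aabb_i$, hence $i \in I_v$. Moreover $\min_{p' \in v}(p'-c_i)^\top \Sigma_i^{-1}(p'-c_i) \le 1$, which is witnessed by $p$ itself. Therefore $v \in \occvoxels$.

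\textbf{Step 2 (local subcover).} We show that $\bigcup_{v \in \occvoxels} v \subseteq \bigcup_{H \in \hullsubcover} H$. Here we use only the defining property of the output of the weighted set cover algorithm (Remark \ref{Remark: Weighted Set Cover Algorithm}): $\voxsubcover$ covers $\occvoxels$. Thus for each $v \in \occvoxels$ there is some $\voxsubcover_{v'} \in \voxsubcover$ with $v \in \voxsubcover_{v'}$.

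Interpreting $\hull(\cdot)$ of a voxel cluster as the convex hull of the union of its voxels, we get $v \subseteq \bigcup \voxsubcover_{v'} \subseteq \hull(\voxsubcover_{v'}) \in \hullsubcover$. This gives the base case $n = N$, since $\hullsubcovern{N} = \hullsubcover$ by \eqref{eq:mergepair_init}.

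\textbf{Step 3 (merging, downward induction on $n$).} The invariant is that every $v \in \occvoxels$ belongs to some element of $\voxsubcovern{n}$. Assume it holds at level $n$, and pass from $\voxsubcovern{n}$ to $\voxsubcovern{n-1}$ via \eqref{eq:mergedcover}. Clusters other than $\voxsubcoverni{n}{i^\star}$ and $\voxsubcoverni{n}{j^\star}$ are kept unchanged. Any voxel in either of those two clusters lies in their union, which is an element of $\voxsubcovern{n-1}$. So the invariant passes to $n-1$.

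At each level the hull step then gives
\[
\bigcup_{v\in\occvoxels} v \subseteq \bigcup_{C \in \voxsubcovern{n}} \hull(C),
\]
with the hulls taken as in \eqref{eq:mergedhull}. One could also note monotonicity, $\hull(A) \cup \hull(B) \subseteq \hull(A \cup B)$, but the voxel-level invariant already suffices. The argument does not depend on which pair the argmin in \eqref{eq:merge} selects, nor on the bounding-box restriction on candidate pairs. Combining Steps 1--3 gives the theorem for every $n \in \{1, \dots, N\}$.

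\textbf{Main obstacle.} The subtle point is Step 1. One must make sure the voxel tiling covers all of $\R^3$ (or at least a bounded region containing every ellipsoid), and that the pruning by $I_v$ is sound. Both follow from $\ellipsoid_i \subseteq \aabb_i$.

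We also note that the theorem concerns the raw $\occvoxels$ from \eqref{eq:sparse_voxel}. The floater filtering used in the hardware experiments deliberately breaks this guarantee, so it is excluded from the statement.
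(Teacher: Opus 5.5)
Your proposal is correct and follows essentially the same route as the paper's proof: voxelization covers $\Chiunsafegs$, the set-cover step lifts this to $\hullsubcover$ (the base case $n=N$), and a downward induction over the merges handles each $\hullsubcovern{n}$. The differences are minor refinements, both sound: you also check that the bounding-box prefilter in \eqref{eq:occupancy_filtered} cannot discard an intersecting ellipsoid, and you carry the induction invariant at the level of voxel clusters rather than through the hull monotonicity $\hull(A)\subseteq\hull(A\cup B)$ that the paper uses.
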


\begin{proof}
First, we prove that $\occvoxels$ covers $\Chiunsafegs := \bigcup_{\ellipsoid_i \in \gsellipsoid} \ellipsoid_i$. Fix $p \in \Chiunsafegs$ arbitrarily.
Since the set of all voxels $\allvoxels$ covers $\R^3$, there exists a voxel $v_p$ which contains $p$, and thus intersects $\ellipsoid_i$. Then, by definition of $\occvoxels$ in \cref{eq:sparse_voxel}, we have $v_p \in \occvoxels$; thus, $\occvoxels$ covers $\Chiunsafegs$. 

Next, we prove that $\hullsubcover := \{\hull(\voxsubcover_v) \mid \voxsubcover_v \in \voxsubcover \}$ covers $\Chiunsafegs$. Again, fix $p \in \Chiunsafegs$. Since $\occvoxels$ covers $\Chiunsafegs$, there exists a voxel $v_p \in \occvoxels$ containing $p$. Since $\voxsubcover$ covers $\occvoxels$, there exists a $\voxsubcover_v \in \voxsubcover$ such that $v_p \in \voxsubcover_v$, so $v_p \subseteq \hull(\voxsubcover_v)$. Thus, $p \in v_p \subseteq \hull(\voxsubcover_v)$, so $\hullsubcover$ covers $\Chiunsafegs$.

Since $\hullsubcovern{N} := \hullsubcover$, to complete the proof, it suffices to prove that if $\hullsubcovern{n+1}$ covers $\Chiunsafegs$ for some $n \in \{1, \cdots, N-1\}$, then $\hullsubcovern{n}$ covers $\Chiunsafegs$. Fix $p \in \Chiunsafegs$ arbitrarily, and let a voxel cluster $\voxsubcoverni{n+1}{i} \in \voxsubcovern{n+1}$, for some $i \in I^{N-n-1}$, be given such that $p \in \hull(\voxsubcoverni{n+1}{i})$. During the process of constructing $\hullsubcovern{n}$ from $\hullsubcovern{n+1}$, either $\voxsubcoverni{n+1}{i}$ will be selected to merge (with $\voxsubcoverni{n+1}{j}$, for some $j \in I^{N-n-1}\backslash \{i\}$), or it will not. In the former case, we have $p \in \hull(\voxsubcoverni{n+1}{i}) \subseteq \hull(\voxsubcoverni{n+1}{i} \cup \voxsubcoverni{n+1}{j})$, with $\voxsubcoverni{n+1}{i} \cup \voxsubcoverni{n+1}{j} \in \voxsubcovern{n}$. In the latter case, we have $p \in \hull(\voxsubcoverni{n+1}{i})$ with $\voxsubcoverni{n+1}{i} \in \voxsubcovern{n}$. Thus, in both cases, there exists a cover set in $\voxsubcovern{n}$ which contains $p$, so $\voxsubcovern{n}$ covers $\Chiunsafegs$.
\end{proof}

\end{document}